\documentclass[10pt,logo,copyright]{nvidiatechreport}
\usepackage{natbib}
\usepackage[utf8]{inputenc} % allow utf-8 input
\usepackage[T1]{fontenc}    % use 8-bit T1 fonts
\usepackage{hyperref}       % hyperlinks
\usepackage{url}            % simple URL typesetting
\usepackage{booktabs}       % professional-quality tables
\usepackage{amsfonts}       % blackboard math symbols
\usepackage{xcolor}
\usepackage{xspace}
\usepackage{nicefrac}       % compact symbols for 1/2, etc.
\usepackage{amsthm}
\newtheorem{definition}{Definition}
\usepackage{microtype}      % microtypography
\usepackage{xcolor}         % colors
\usepackage[utf8]{inputenc} % allow utf-8 input
\usepackage[T1]{fontenc}    % use 8-bit T1 fonts
\usepackage{url}            % simple URL typesetting
\usepackage{booktabs}       % professional-quality tables
\usepackage{amsfonts}       % blackboard math symbols
\usepackage{nicefrac}       % compact symbols for 1/2, etc.
\usepackage{microtype}      % microtypography
\usepackage{xcolor}         % colors
\usepackage[utf8]{inputenc} 
\usepackage[T1]{fontenc}    
\usepackage{hyperref}
\hypersetup{colorlinks}
\usepackage{url}           
\usepackage{booktabs}       
\usepackage{amsmath}       
\usepackage{nicefrac}     
\usepackage{microtype}      
\usepackage{multirow}      
\usepackage{graphicx}
\usepackage{graphics}
\usepackage{tabularx}
\usepackage{makecell}
\usepackage{caption}
\usepackage{wrapfig}
\usepackage{subcaption}
\usepackage{amsfonts}      
\usepackage{nicefrac}
\usepackage{amsmath,bm}
\usepackage{enumitem}
\usepackage[table]{xcolor}
\usepackage{graphicx}
\usepackage{amsmath,amssymb}
\usepackage{geometry}
\usepackage{amsthm}

\usepackage{titletoc}
\usepackage{minitoc}        % for \startcontents and \printcontents
\usepackage{tocloft} % For better ToC customization

\usepackage{algorithm}
\usepackage{verbatim}
\usepackage{listings}
\usepackage{wrapfig}
\usepackage{mathtools}
\definecolor{codeblue}{rgb}{0.25,0.5,0.25}
\definecolor{codekw}{rgb}{0.85, 0.18, 0.50}
\usepackage{subcaption}
\usepackage{tikz}
\usepackage{comment}
\usepackage{overpic}
\usepackage{algorithm}
\usepackage{algpseudocode}

\theoremstyle{plain}
\newtheorem{theorem}{Theorem}[section]
\newtheorem{proposition}[theorem]{Proposition}

\theoremstyle{definition}

\theoremstyle{remark}

\usepackage{booktabs}
\usepackage{xcolor,colortbl}
\usepackage{pgfplots}\pgfplotsset{compat=1.18}
\usepackage{pgfplots}
\pgfplotsset{compat=1.18}

\renewcommand{\arraystretch}{1.12}

\usepackage{amsmath,amsfonts,bm}

\def\eqref#1{equation~\ref{#1}}
\def\1{\bm{1}}

\DeclareMathAlphabet{\mathsfit}{\encodingdefault}{\sfdefault}{m}{sl}
\SetMathAlphabet{\mathsfit}{bold}{\encodingdefault}{\sfdefault}{bx}{n}

\usepackage{hyperref}
\usepackage{url}

\usepackage[most]{tcolorbox} % loads skins, breakable, theorems, etc.

\usepackage{pgfplots}
\usepackage{pgfplotstable}
\pgfplotsset{compat=1.18}
\usetikzlibrary{patterns} % Required for the hatched lines
\usepackage{tcolorbox}

\usepackage{tcolorbox}
\definecolor{myorange}{RGB}{217, 119, 37} % Similar to Qwen/DeepSeek bars
\definecolor{myteal}{RGB}{60, 174, 163}   % Similar to OpenReasoning bars

\newtcolorbox{takeaway}[3][]{
  enhanced,
  sharp corners,
  boxrule=0pt,
  colback=#2!4!white,
  borderline west={3.5pt}{0pt}{#2!85!black},
  drop fuzzy shadow=black!30!white,
  arc=2.6mm,
  left=6mm, right=6mm, top=4mm, bottom=4mm,
  coltitle=black,
  title={\small\bfseries #3}, % slimmer title font
  attach boxed title to top left={yshift=-3.0mm, xshift=4.0mm}, % tighter vertical spacing
  boxed title style={
    boxrule=0.4pt,
    colframe=#2!70!black,
    colback=#2!10!white, % lighter title fill
    rounded corners,
    drop shadow=black!15,
    top=0.6mm, bottom=0.6mm, left=1.8mm, right=1.8mm % slimmer padding
  },
  before skip=2mm, after skip=2mm,
  #1
}

\title{iGRPO: Self‑Feedback–Driven LLM Reasoning}

\author{Ali Hatamizadeh\footnote[1]{Project lead. Correspondence to: Ali Hatamizadeh<ahatamizadeh@nvidia.com>.}, Shrimai Prabhumoye, Igor Gitman, Ximing Lu, Seungju Han, Wei Ping, Yejin Choi, Jan Kautz\\
}

\begin{document}

\maketitle

\begin{abstract}
Large Language Models (LLMs) have shown promise in solving complex mathematical problems, yet they still fall short of producing accurate and consistent solutions. Reinforcement Learning (RL) is a framework for aligning these models with task-specific rewards, improving overall quality and reliability. Group Relative Policy Optimization (GRPO) is an efficient, value-function-free alternative to Proximal Policy Optimization (PPO) that leverages group-relative reward normalization. We introduce Iterative Group Relative Policy Optimization (iGRPO), a two-stage extension of GRPO that adds dynamic self-conditioning through model-generated drafts. In Stage 1, iGRPO samples multiple exploratory drafts and selects the highest-reward draft using the same scalar reward signal used for optimization. In Stage 2, it appends this best draft to the original prompt and applies a GRPO-style update on draft-conditioned refinements, training the policy to improve beyond its strongest prior attempt. Under matched rollout budgets, iGRPO consistently outperforms GRPO across base models (e.g., Nemotron-H-8B-Base-8K and DeepSeek-R1 Distilled), validating its effectiveness on diverse reasoning benchmarks. Moreover, applying iGRPO to OpenReasoning-Nemotron-7B trained on AceReason-Math achieves new state-of-the-art results of 85.62\% and 79.64\% on AIME24 and AIME25, respectively. Ablations further show that the refinement wrapper generalizes beyond GRPO variants, benefits from a generative judge, and alters learning dynamics by delaying entropy collapse. These results underscore the potential of iterative, self-feedback-based RL for advancing verifiable mathematical reasoning.
\end{abstract}

\abscontent
% \begin{center} \textbf{Code:} \url{https://github.com/NVlabs/iGRPO} \\ 
% \end{center}

% \begin{figure}[h]
% \centerline{\includegraphics[width=0.665\linewidth]{figures/teaser_igrpo.pdf}}
% \vspace{-.3cm}
% \caption{Comparison of AIME 2024 Pass@1 accuracy across models of varying sizes. 
%     Our model, \textbf{i-Nemotron-7B}, achieves \textbf{85.7\%}, significantly outperforming other 7B--14B counterparts and even surpassing several larger models, 
%     highlighting the efficiency and strong reasoning ability of our approach.}
% \vspace{-.3cm}
% \label{fig:teaser_igrpo}
% \end{figure}

\section{Introduction}
\label{sec:introduction}
\vspace{-.2cm}
Reinforcement Learning (RL) has proven to be successful in improving reasoning capabilities of LLMs by optimizing against task-specific reward signals. Early successes in this direction include RL from Human Feedback (RLHF) for aligning LLMs with human intent, most notably in InstructGPT~\citep{ouyang2022training} and ChatGPT~\citep{achiam2023gpt}, which have demonstrated that incorporating preference-based rewards can dramatically improve both the usability and correctness of model outputs. Recently DeepSeek-R1~\citep{guo2025deepseek} proposed a distinguishing feature which is the so-called zero configuration, wherein the RL process directly enhances the base language model. This breakthrough started several efforts which were targeted at replicating DeepSeek-R1’s methodology or refining its underlying RL mechanisms~\citep{zeng2025simplerl,yu2025dapo,liu2025understanding,cui2025process,hu2025open}.

Yet, in the realm of complex reasoning, RL algorithms typically do not incorporate any form of \emph{feedback or reflection} on the model’s own outputs. Humans, by contrast, rarely solve nontrivial problems in a single pass: they often iterate on initial drafts, identify mistakes, and refine their solutions based on internal feedback~\citep{flower1981cognitive,simon2012architecture,braidotti2019theoretical,flavell1979metacognition,schon2017reflective,polya2014solve}. There is growing evidence that self-feedback mechanisms can bolster multi-step reasoning and the capacity to correct errors~\citep{madaan2023self,shinn2023reflexion}. However, existing RL frameworks do not capitalize on this iterative refinement process, leaving a critical gap between how humans naturally solve problems and how LLMs are typically trained to do so.

In this work, we propose to fill this gap with \emph{Iterative GRPO (iGRPO)} which is a powerful extension of GRPO~\citep{shao2024deepseekmath}. As illustrated in Figure~\ref{fig:pipeline}, Our method operates in two stages. First, we draw multiple candidate completions from the model and compute their relative rewards via the group-based mechanism of GRPO. We then select the highest-scoring draft and this serves as the "first-draft" output of the model. We consider this highest-scoring response as a guide to improve the final output.
Hence, it is provided as a self-feedback to the model.
We feed it back to the model alongside the original prompt. By conditioning on this exemplar, the second stage encourages the model to refine and surpass its own best prior attempt. Notably, this design preserves the efficiency of GRPO while introducing only minimal extra overhead, as iGRPO still relies on the same set of group-based reward signals. In doing so, iGRPO offers a promising avenue for self-guided improvement, enabling LLMs to iteratively improve their reasoning capabilities.

\begin{figure}[t]
  \centering
  \includegraphics[
    width=\linewidth,
    trim=0 0 0 1.06cm,
    clip
  ]{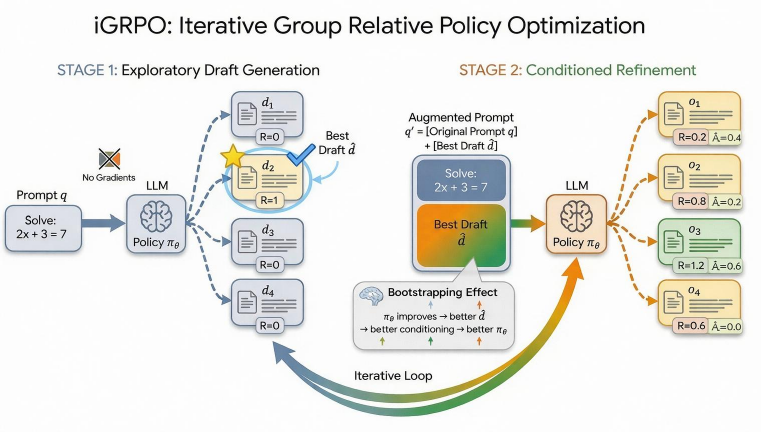}
    \caption{\textbf{Iterative GRPO (iGRPO):} During \textbf{Exploratory Draft Generation}, the model selects a high-scoring ``best draft'' from initial samples and appends it to the prompt for \textbf{Conditioned Refinement}. This augmented context guides the generation of new group-based updates, creating a bootstrapping effect where the policy continuously improves its own conditioning signal to enhance reasoning.}
  \label{fig:pipeline}
\end{figure}

We conduct a series of controlled experiments to compare iGRPO and GRPO under identical 
training conditions, using different base models trained on the Mathematics Aptitude Test of 
Heuristics (MATH)~\citep{hendrycks2021measuring} dataset. Specifically, we evaluate DeepSeek-R1 Distilled~\citep{guo2025deepseek} 
and OpenMath-Nemotron~\citep{moshkov2025aimo} on an extensive array of mathematical reasoning benchmarks, 
including AIME24~\citep{aimo_validation_aime_2024}, AIME25~\citep{opencompass_aime2025}, MATH500~\citep{lightman2023let}, 
AMC23~\citep{aimo_validation_amc_2024}, GSM8K~\citep{gsm8k}, and Minerva Math~\citep{minerva}. 
For models with 7B and 14B parameters, iGRPO consistently outperforms standard GRPO. Moreover, by leveraging iGRPO algorithm with OpenReasoning-Nemotron-7B model~\citep{nvidia_openreasoning_nemotron_7b} on the large-scale \textbf{AceReason‑Math}~\citep{chen2025acereason} dataset~\citep{openr1_math_220k}, we push the state of the art on AIME24 and AIME25 to 85.62\% and 79.64\%, respectively. These findings underscore the effectiveness of incorporating a self-feedback stage into group-based RL optimization, particularly for complex mathematical reasoning tasks.

\section{Related Work}
\label{sec:related_work}

\paragraph{RL for Reasoning.}
Reinforcement learning (RL) has become an important tool for refining large language models on logical and analytical tasks \citep{lambert2024t}. Early self-improvement lines of work, such as STaR-style bootstrapping with verified outcomes and sampling-based selection (as discussed in \citep{lambert2024t}), demonstrate that iteratively leveraging model-generated solutions can improve reasoning behavior. More recent systems scale outcome-driven training substantially \citep{jaech2024openai}, and open-weight efforts such as DeepSeek-R1 report strong reasoning performance under similar training regimes \citep{guo2025deepseek}. Beyond natural language tasks, RL on procedurally generated puzzles \citep{xie2025logic} and settings with limited human demonstrations \citep{wang2025reinforcementlearningreasoninglarge} further highlight the breadth of RL as a mechanism for improving mathematical and logical reasoning.

\paragraph{GRPO and Variants.}
There has also been rapid progress in refining and extending GRPO for large-scale LLM training.
Dr.\ GRPO \citep{liu2025understanding} analyzes sources of bias in GRPO-style token-level objectives and proposes modifications such as removing divisions by sequence length and group-level standard deviation to better match unbiased policy gradients.
DAPO \citep{yu2025dapo} targets long chain-of-thought training through dynamic sampling, decoupled clipping, and reward shaping designed to mitigate instability and reward noise.
GSPO \citep{zheng2025group} instead operates at the sequence level, redefining importance ratios and applying sequence-level clipping to improve stability, especially in Mixture-of-Experts settings.
Whereas these approaches primarily focus on stabilizing or correcting the underlying optimization objective, iGRPO is orthogonal: it introduces a two-stage mechanism that uses externally evaluated best drafts as additional training context, shaping the data distribution seen by the optimizer.

\paragraph{LLM Self-Learning and Self-Improvement.}
LLM self-learning methods aim to improve a model by leveraging feedback signals produced by the model itself or by closely related agents. For example, SPIN and Self-Rewarding Language Models \citep{chen2024self,yuan2401self} use the model (or a closely coupled variant) as an internal evaluator to drive further learning. Other approaches incorporate self-play, verifier-based alignment, or proof-oriented training signals \citep{kirchner2024prover,DBLP:journals/corr/abs-2411-00062}, though unreliable rewards can hinder complex reasoning \citep{lambert2024t}. More specialized self-play extensions, such as SPC \citep{chen2025spcevolvingselfplaycritic} and SPAG \citep{DBLP:conf/nips/ChengHXZDHDL24}, introduce curated tasks or adversarial scenarios to strengthen critique and robustness. Self-Verification \citep{zhang2025incentivizing} unifies problem-solving and generative verification within a single RL framework, enabling inference-time scaling by using the model's own verification scores to reweight or aggregate sampled solutions. Critique-GRPO \citep{zhang2025critique} augments GRPO with natural-language critiques by generating critique-conditioned refinements and optimizing over both initial answers and their refinements in an online RL loop.
While these methods often blur the roles of reward provider and reward recipient within a single agent (or paired agents), \emph{iGRPO} instead uses \emph{externally evaluated} best-prior drafts as an in-context guide for subsequent generations during training. This maintains a clearer separation between the model's generation process and the reward signal, while still leveraging the core self-improvement principle of learning from the model's own outputs.

\section{Methodology}

\subsection{Background: Group Relative Policy Optimization}
\label{sec:background-grpo}

GRPO is a value-function-free variant of proximal policy optimization that leverages \emph{group-based} relative rewards for advantage estimation. Let $\pi_{\theta}$ denote the current policy and $\pi_{\theta_{\text{old}}}$ the old policy. We begin with a pretrained language model $\mathcal{M}$ and a set of training instances $\mathcal{B} = \{(q, a)\}$, where $q$ is a prompt (e.g., a math problem) and $a$ is a reference answer. Given a prompt $q$, GRPO samples a group of $G$ candidate outputs $\{o_1, o_2, \dots, o_G\}$ from $\pi_{\theta_{\text{old}}}$:
\[
    o_i \sim \pi_{\theta_{\text{old}}}(\cdot \mid q) \quad \text{for } i=1,\dots,G.
\]
A reward model then evaluates each sampled output $o_i$, resulting in scores $\{R_1, \dots, R_G\}$. GRPO normalizes these scores within the group to compute the advantage $\hat{A}_{i,t}$ at each token index $t$ of $o_i$:
\[
    \hat{A}_{i,t}
    \;=\;
    \frac{
      R_i \;-\; \mathrm{mean}(\{R_1,\dots,R_G\})
    }{
      \mathrm{std}(\{R_1,\dots,R_G\})
    },
\]
where $\mathrm{mean}(\cdot)$ and $\mathrm{std}(\cdot)$ denote the sample mean and sample standard deviation of the group’s reward scores. If $\mathrm{std}(\{R_1,\dots,R_G\}) = 0$, we set the normalized advantages to $0$ (equivalently, one may add a small constant $\delta$ to the denominator); we apply the same convention in iGRPO. Note that all tokens $t$ in $o_i$ share the same advantage $\hat{A}_{i,t} = \hat{A}_{i}$, reflecting a single scalar reward for each sampled completion.

GRPO then updates the current policy $\pi_{\theta}$ by maximizing a clipped surrogate objective. Let
\[
  r_{i,t}(\theta)
  \;=\;
  \frac{
    \pi_{\theta}(o_{i,t}\mid q,o_{i,<t})
  }{
    \pi_{\theta_{\text{old}}}(o_{i,t}\mid q,o_{i,<t})
  }.
\]

Then the GRPO objective is:
\begin{small}
\begin{align}
\mathcal{J}_{\text{GRPO}}(\theta)
=\;&
\mathbb{E}\Bigl[
  q \sim P(Q),\;
  \{o_i\}_{i=1}^{G} \sim \pi_{\theta_{\text{old}}}
\Bigr]
\nonumber\\
& \times
\frac{1}{G}
\sum_{i=1}^G
\frac{1}{|o_i|}
\sum_{t=1}^{|o_i|}
\Bigl[
  \min\!\Bigl(
    r_{i,t}(\theta)\,\hat{A}_{i,t},\;
    \mathrm{clip}\!\bigl(
      r_{i,t}(\theta),\;1 - \epsilon,\;1 + \epsilon
    \bigr)\,\hat{A}_{i,t}
  \Bigr)
  \;-\;\beta\,\widehat{D}_{\mathrm{KL}}^{(i,t)}
\Bigr],
\label{eq:grpo-objective}
\end{align}
\end{small}
where $\epsilon$ is the PPO clipping parameter, $\beta$ is a regularization coefficient on the KL divergence to a reference policy $\pi_{\text{ref}}$, and $|o_i|$ denotes the token length of completion $o_i$. We use the following non-negative \emph{per-token} estimator $\widehat{D}_{\mathrm{KL}}^{(i,t)}$ as a practical KL penalty \cite{kl_approx}:
\[
\widehat{D}_{\mathrm{KL}}^{(i,t)}
\;=\;
\frac{\pi_{\text{ref}}(o_{i,t}\mid q,o_{i,<t})}
     {\pi_{\theta}(o_{i,t}\mid q,o_{i,<t})}
\;-\;\log\!
\frac{\pi_{\text{ref}}(o_{i,t}\mid q,o_{i,<t})}
     {\pi_{\theta}(o_{i,t}\mid q,o_{i,<t})}
\;-\;1,
\]
which remains guaranteed to be non-negative. This estimator is unbiased for $D_{\mathrm{KL}}(\pi_{\theta}\,\|\,\pi_{\text{ref}})$ when the expectation is taken under samples from $\pi_{\theta}$, and it serves as a convenient sample-based penalty within PPO-style updates. By directly computing group-based advantages instead of estimating value functions, GRPO avoids the overhead of training a separate critic model, making it particularly appealing for large-scale language model fine-tuning.

\subsection{Iterative Group Relative Policy Optimization}
\label{sec:igrpo_method}

We introduce \textbf{Iterative Group Relative Policy Optimization (iGRPO)}, a reinforcement learning algorithm that implements \emph{bootstrapped policy improvement} through dynamic self-conditioning. Unlike standard policy gradient methods that optimize single-shot generations, iGRPO establishes a closed-loop refinement process where the policy learns to systematically improve upon its own best attempts. The key insight is that by coupling exploration (Stage~1) with conditioned exploitation (Stage~2) within each optimization step, the model acquires a generalizable self-improvement capability that compounds in training.

\subsubsection{Motivation: From Static Examples to Dynamic Self-Conditioning}
\label{sec:motivation}

The standard GRPO objective (Eq.~\ref{eq:grpo-objective}) treats each generation as independent, ignoring potentially valuable information from the model's own generation process. In-context learning (ICL) addresses this by conditioning on fixed demonstrations:
\[
\mathcal{J}_{\text{ICL}}(\theta)
=
\mathbb{E}_{q \sim P(Q)}
\left[
  \mathbb{E}_{o \sim \pi_\theta(\cdot|q, e)}
  \left[
    R_\phi(o)
  \right]
\right],
\]
where $e$ is a static example. However, ICL suffers from a fundamental limitation: the conditioning signal $e$ remains fixed throughout training and does not adapt to the evolving policy's capabilities.

iGRPO introduces a different paradigm, \textbf{dynamic self-conditioning}, where the conditioning signal is generated by the policy itself and co-evolves with learning:
\[
\begin{aligned}
\mathcal{J}_{\text{iGRPO}}(\theta)
&=
\mathbb{E}_{q \sim P(Q)}\!\left[
  \mathbb{E}_{o \sim \pi_\theta(\cdot\!\mid\! q'_\theta(q))}\!\left[
    R_\phi(o)
  \right]
\right], \\
q'_\theta(q)
&=
\mathrm{Concat}\!\bigl(q,\hat{d}_\theta(q)\bigr)
\end{aligned}
\]
where $\hat{d}_\theta(q)$ is the best draft generated by the \emph{current} policy (defined formally below). This creates a bootstrapped learning dynamic: as $\pi_\theta$ improves, so does the quality of $\hat{d}_\theta$, which in turn provides increasingly informative conditioning for subsequent generations.

Although the expressions above are written in terms of $\pi_\theta$ for clarity, iGRPO follows the PPO/GRPO convention for stable optimization. At each iteration, we take a snapshot $\pi_{\theta_{\text{old}}}$, sample both Stage~1 drafts and Stage~2 completions from $\pi_{\theta_{\text{old}}}$, and update $\theta$ using importance ratios computed relative to $\pi_{\theta_{\text{old}}}$. Stage~1 is not differentiated through, but the distribution of selected drafts $\hat{d}$ (and thus the self-conditioned prompts $q'$) shifts as $\theta$ evolves across iterations. In addition, dynamic self-conditioning is used only during training. At inference time, we use the trained policy in the standard single-shot manner, generating directly from the original prompt $q$ without any draft generation, conditioning, or specialized selection scheme.

\subsubsection{Algorithmic Framework}
\label{sec:algorithm}

iGRPO operates through two tightly coupled stages within each optimization step. Crucially, only Stage~2 outputs receive gradient updates, while Stage~1 serves as an adaptive exploration mechanism that shapes the optimization landscape.

\paragraph{Stage 1: Exploratory Draft Generation.}
Given a prompt $q$, we sample $N$ candidate drafts from the current policy snapshot:
\[
d_i \sim \pi_{\theta_{\text{old}}}(\cdot \mid q), \quad i = 1, \ldots, N.
\]
Each draft is evaluated using a reward function $R_\phi$, and we identify the highest-scoring draft:
\begin{equation}
\hat{d} = \arg\max_{i \in \{1, \ldots, N\}} R_\phi(d_i).
\label{eq:best_draft}
\end{equation}
This stage performs \emph{implicit curriculum generation}: early in training, $\hat{d}$ may be a weak solution, but as the policy improves, $\hat{d}$ increasingly represents a high-quality attempt that approaches (but does not yet reach) optimal performance.

\paragraph{Stage 2: Conditioned Refinement.}
We form an augmented prompt by appending the best draft immediately after the original prompt:
\begin{equation}
q' = \text{Concat}(q, \hat{d}).
\label{eq:igrpo_prompt_concat}
\end{equation}
We then sample a group of $G$ completions, mirroring the standard GRPO sampling:
\[
o_j \sim \pi_{\theta_{\text{old}}}(\cdot \mid q'), \quad j = 1, \ldots, G.
\]
These completions are scored, and GRPO-style advantage estimation and policy updates are applied exclusively to these Stage~2 outputs. In practice, the concatenation in Eq.~\ref{eq:igrpo_prompt_concat} uses a fixed prompt template, which we provide in the supplementary materials.

\subsubsection{Theoretical Analysis: Bootstrapped Policy Improvement}
\label{sec:theory}

\begin{definition}[Self-Conditioned Prompt Construction]
Given a prompt $q$ and policy $\pi_\theta$, sample $N$ drafts
\[
d_i \sim \pi_\theta(\cdot \mid q), \quad i=1,\ldots,N,
\]
and select the best draft
\[
\hat{d}_\theta(q) = \arg\max_{i \in \{1,\ldots,N\}} R_\phi(d_i).
\]
We then define the self-conditioned prompt by appending the best draft immediately after the original prompt:
\[
q'_\theta(q) = \text{Concat}\bigl(q, \hat{d}_\theta(q)\bigr).
\]
\end{definition}

Unlike static ICL where the conditioning is independent of $\theta$, iGRPO's conditioning is \emph{policy-dependent}: the constructed prompt $q'_\theta(q)$ changes as $\theta$ changes, since $\hat{d}_\theta(q)$ is generated by $\pi_\theta$. This creates a coupled dynamical system.

\begin{proposition}[Progressive Conditioning Quality for Binary Rewards]
Assume the reward is binary, $R_\phi(o) \in \{0,1\}$, and Stage~1 drafts $\{d_i\}_{i=1}^N$ are sampled i.i.d.\ from $\pi_\theta(\cdot|q)$. Let
\[
V_\theta(q) = \mathbb{E}_{o \sim \pi_\theta(\cdot|q)}[R_\phi(o)]
\]
denote the expected reward under policy $\pi_\theta$, which equals the success probability $p_\theta(q) = \Pr[R_\phi(o)=1]$ in the binary case. Then the expected reward of the selected best draft $\hat{d}_\theta(q) = \arg\max_i R_\phi(d_i)$ satisfies
\[
\mathbb{E}\bigl[R_\phi(\hat{d}_\theta(q))\bigr]
=
1 - (1 - V_\theta(q))^N,
\]
which is monotonically increasing in $V_\theta(q)$. Consequently, if optimization increases $V_\theta(q)$, then $\mathbb{E}[R_\phi(\hat{d}_\theta(q))]$ also increases, improving the conditioning quality for subsequent iterations in expectation.
\end{proposition}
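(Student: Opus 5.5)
Since the reward is binary, the selected draft's reward is just the indicator that at least one of the $N$ drafts succeeded. Indeed, $R_\phi(\hat{d}_\theta(q)) = \max_{i} R_\phi(d_i)$ by definition of the argmax; ties do not matter, because any maximizer has the same reward. With $R_\phi(d_i)\in\{0,1\}$, this maximum equals $1$ exactly when some $d_i$ has reward $1$, and $0$ otherwise. So $\mathbb{E}[R_\phi(\hat{d}_\theta(q))] = \Pr[\exists i: R_\phi(d_i)=1]$.

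Next I will pass to the complement. The event that all drafts fail is $\bigcap_i \{R_\phi(d_i)=0\}$. Because the drafts are i.i.d.\ from $\pi_\theta(\cdot\mid q)$, this event has probability $\prod_{i=1}^N \Pr[R_\phi(d_i)=0] = (1-p_\theta(q))^N$. I will also note that $V_\theta(q)=\mathbb{E}[R_\phi(o)]=\Pr[R_\phi(o)=1]=p_\theta(q)$, since the expectation of a $\{0,1\}$-valued variable is its success probability. Together these give the formula $1-(1-V_\theta(q))^N$.

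For monotonicity, I will set $f(v)=1-(1-v)^N$ on $[0,1]$. Its derivative is $f'(v)=N(1-v)^{N-1}$. This is strictly positive for $v\in[0,1)$ and vanishes only at the endpoint $v=1$, so $f$ is strictly increasing on $[0,1]$. Alternatively, one can note that $(1-v)^N$ is a composition of a decreasing map with an increasing power on nonnegative reals. The "consequently" clause is then immediate: if an update moves $V_\theta(q)$ to some $V_{\theta'}(q) > V_\theta(q)$, then $f(V_{\theta'}(q)) > f(V_\theta(q))$, which is the claimed improvement in expected conditioning quality.

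No step is a genuine obstacle. The only point that needs care is stating that the expectation is taken over the Stage~1 sampling at a fixed $q$ and fixed $\theta$. The last sentence of the statement is a per-$q$ comparison across two parameter values, not a statement about the optimization dynamics, and I will phrase it that way.
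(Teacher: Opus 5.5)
Your proposal is correct and follows essentially the same route as the paper's proof: you identify $R_\phi(\hat{d}_\theta(q))$ with the indicator that at least one draft succeeds, compute the all-fail probability $(1-V_\theta(q))^N$ from i.i.d.\ sampling, and note that $1-(1-x)^N$ is increasing on $[0,1]$. Your remarks on ties in the $\arg\max$ and the explicit derivative $N(1-v)^{N-1}$ simply spell out details the paper leaves implicit.
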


\begin{proof}
For binary rewards, $R_\phi(\hat{d}_\theta(q))=1$ if and only if at least one of the $N$ sampled drafts achieves reward $1$. Under i.i.d.\ sampling, this occurs with probability
\[
1 - \Pr\bigl[\forall i,\; R_\phi(d_i)=0\bigr]
=
1 - (1 - V_\theta(q))^N.
\]
Since $R_\phi(\hat{d}_\theta(q)) \in \{0,1\}$, its expectation equals this probability. The function $1-(1-x)^N$ is increasing in $x \in [0,1]$, establishing monotonicity in $V_\theta(q)$.
\end{proof}

This proposition captures the \textbf{bootstrapping effect}: better policies generate better drafts, which provide more informative conditioning, which enables learning better policies. The model does not merely learn to copy the conditioning; it learns a \emph{refinement function} that maps draft attempts to improved solutions.

\subsubsection{Mathematical Formulation}
\label{sec:math_formulation}

Building on the GRPO objective in Eq.~\ref{eq:grpo-objective}, we present the complete iGRPO formulation. Let $\pi_\theta$ denote the policy being optimized, with $\pi_{\theta_{\text{old}}}$ representing the policy snapshot at the start of each iteration for importance sampling.
\paragraph{Stage 1: Draft Selection.}
For each prompt $q$, Stage~1 samples $N$ drafts and selects the best:
\[
\{d_1, \ldots, d_N\} \sim \pi_{\theta_{\text{old}}}(\cdot \mid q), \qquad
\hat{d} = \arg\max_{i \in \{1,\ldots,N\}} R_\phi(d_i).
\]
\paragraph{Stage 2: Conditioned Generation and Advantage Computation.}
We form the augmented prompt by appending the selected draft to $q$ (Eq.~\ref{eq:igrpo_prompt_concat}) and sample $G$ completions:
\[
\{o_1, \ldots, o_G\} \sim \pi_{\theta_{\text{old}}}(\cdot \mid q').
\]
Each completion $o_j$ receives a reward $R_\phi(o_j)$, and advantages are computed via group normalization as in standard GRPO:
\begin{equation}
\hat{A}_j
=
\frac{
  R_\phi(o_j) - \mathrm{mean}(\{R_\phi(o_1), \ldots, R_\phi(o_G)\})
}{
  \mathrm{std}(\{R_\phi(o_1), \ldots, R_\phi(o_G)\})
}.
\label{eq:igrpo_advantage}
\end{equation}
If $\mathrm{std}(\{R_\phi(o_1), \ldots, R_\phi(o_G)\}) = 0$, we set the normalized advantages to $0$ (equivalently, one may add a small constant $\delta$ to the denominator). All tokens $t$ in $o_j$ share the same advantage $\hat{A}_{j,t} = \hat{A}_j$, consistent with the GRPO formulation.

\paragraph{Full Objective.}
The complete iGRPO objective combines both stages:
\begin{align}
\mathcal{J}_{\text{iGRPO}}(\theta)
&=
\mathbb{E}\Bigl[q \sim P(Q)\Bigr]\;
\mathbb{E}\Bigl[
  \underbrace{\{d_i\}_{i=1}^N \sim \pi_{\theta_{\text{old}}}(\cdot \mid q)}_{\text{Stage 1}},\;
  \hat{d} = \arg\max_i R_\phi(d_i),\;
  q' = \mathrm{Concat}(q,\hat{d}),\;
  \underbrace{\{o_j\}_{j=1}^G \sim \pi_{\theta_{\text{old}}}(\cdot \mid q')}_{\text{Stage 2}}
\Bigr]
\nonumber\\
&\quad \times
\frac{1}{G}
\sum_{j=1}^{G}
\frac{1}{|o_j|}
\sum_{t=1}^{|o_j|}
\Bigl[
  \min\!\Bigl(
    r_{j,t}(\theta)\,\hat{A}_{j},\;
    \mathrm{clip}\!\bigl(
      r_{j,t}(\theta),\,1-\epsilon,\,1+\epsilon
    \bigr)\,\hat{A}_{j}
  \Bigr)
  \;-\;\beta\,\widehat{D}_{\mathrm{KL}}^{(j,t)}
\Bigr],
\label{eq:igrpo_objective}
\end{align}
where the importance sampling ratio is computed with respect to the augmented prompt:
\[
r_{j,t}(\theta)
=
\frac{\pi_{\theta}(o_{j,t} \mid q', o_{j,<t})}
     {\pi_{\theta_{\text{old}}}(o_{j,t} \mid q', o_{j,<t})},
\]
and we use the same non-negative per-token KL penalty as in Section~\ref{sec:background-grpo}, computed under the augmented prompt:
\[
\widehat{D}_{\mathrm{KL}}^{(j,t)}
=
\frac{\pi_{\text{ref}}(o_{j,t} \mid q', o_{j,<t})}
     {\pi_{\theta}(o_{j,t} \mid q', o_{j,<t})}
-
\log
\frac{\pi_{\text{ref}}(o_{j,t} \mid q', o_{j,<t})}
     {\pi_{\theta}(o_{j,t} \mid q', o_{j,<t})}
-1.
\]
The key structural difference from vanilla GRPO (Eq.~\ref{eq:grpo-objective}) is that all Stage~2 completions are conditioned on the augmented prompt $q'$, formed by appending the best Stage~1 draft immediately after the original prompt $q$. This provides a self-feedback signal that encourages the policy to refine its solutions beyond its strongest initial attempt.

\paragraph{Reward Function.}
Following standard practice for verifiable reasoning tasks, we employ a rule-based reward function:
\begin{equation}
R_\phi(o) = \mathbf{1}\left[\texttt{extract}(o) = a\right],
\label{eq:reward}
\end{equation}
where $\texttt{extract}(\cdot)$ parses the final answer from the completion and $a$ is the ground-truth reference answer from the training instance $(q, a) \in \mathcal{B}$. This binary reward, combined with group normalization (Eq.~\ref{eq:igrpo_advantage}), provides sufficient signal for distinguishing solution quality within each prompt group.
The complete iGRPO procedure is presented in Algorithm~\ref{alg:igrpo_full}.

\begin{algorithm}[t]
\caption{Iterative Group Relative Policy Optimization (iGRPO)}
\label{alg:igrpo_full}
\begin{algorithmic}[1]
\Require Pretrained model $\mathcal{M}$, training set $\mathcal{B} = \{(q, a)\}$, reward function $R_\phi$, draft count $N$, group size $G$, clipping parameter $\epsilon$, KL coefficient $\beta$, iterations $I$, batch size $S$
\State Initialize $\pi_\theta \gets \mathcal{M}$, $\pi_{\text{ref}} \gets \mathcal{M}$
\For{iteration $= 1, \ldots, I$}
    \State $\pi_{\theta_{\text{old}}} \gets \pi_\theta$ \Comment{Snapshot policy for sampling}
    \State Sample batch $\{(q^{(k)}, a^{(k)})\}_{k=1}^S \sim \mathcal{B}$
    \For{$k = 1, \ldots, S$}
        \State \textcolor{blue}{\texttt{// Stage 1: Draft Generation}}
        \State Sample drafts $\{d_i^{(k)}\}_{i=1}^N \sim \pi_{\theta_{\text{old}}}(\cdot \mid q^{(k)})$
        \State Compute rewards $\{R_\phi(d_i^{(k)})\}_{i=1}^N$ using reference answer $a^{(k)}$
        \State Select best draft: $\hat{d}^{(k)} \gets \arg\max_i R_\phi(d_i^{(k)})$
        \State \textcolor{blue}{\texttt{// Stage 2: Conditioned Refinement}}
        \State Construct augmented prompt: $q'^{(k)} \gets \text{Concat}(q^{(k)}, \hat{d}^{(k)})$
        \State Sample refinements $\{o_j^{(k)}\}_{j=1}^G \sim \pi_{\theta_{\text{old}}}(\cdot \mid q'^{(k)})$
        \State Compute rewards $\{R_\phi(o_j^{(k)})\}_{j=1}^G$
        \State Compute advantages $\{\hat{A}_j^{(k)}\}$ via Eq.~\ref{eq:igrpo_advantage}
    \EndFor
    \State \textcolor{blue}{\texttt{// Policy Update}}
    \State $\theta \gets \theta + \eta \nabla_\theta \mathcal{J}_{\text{iGRPO}}(\theta)$ \Comment{See Eq.~\ref{eq:igrpo_objective}}
\EndFor
\State \Return $\pi_\theta$
\end{algorithmic}
\end{algorithm}

\subsubsection{Computational Analysis}
\label{sec:complexity}

Although iGRPO uses two stages, its compute is primarily controlled by the \emph{total} number of sampled completions per prompt. With a fixed sampling budget, iGRPO can be run at essentially the same dominant generation cost as GRPO.

Let $C_{\text{gen}}$ denote the cost of producing one sampled completion (including prompt encoding, autoregressive decoding, and reward evaluation). Let $G_{\text{GRPO}}$ denote the number of completions sampled per prompt in standard GRPO.

\paragraph{Baseline GRPO Cost.}
Standard GRPO samples $G_{\text{GRPO}}$ completions per prompt, giving per-prompt rollout cost
\[
C_{\text{GRPO}} \;\approx\; G_{\text{GRPO}} \, C_{\text{gen}}.
\]

\paragraph{iGRPO Rollout Cost.}
iGRPO samples $N$ Stage~1 drafts and $G$ Stage~2 refinements per prompt (Section~\ref{sec:igrpo_method}), so
\[
C_{\text{iGRPO}} \;\approx\; (N + G)\, C_{\text{gen}},
\qquad
\frac{C_{\text{iGRPO}}}{C_{\text{GRPO}}}
\;=\;
\frac{N+G}{G_{\text{GRPO}}}.
\]
In our experiments, we keep the sampling budget fixed by setting
\[
N + G \;=\; G_{\text{GRPO}},
\]
so iGRPO redistributes the same number of rollouts across Stage~1 and Stage~2 rather than increasing them. For example, with $G_{\text{GRPO}}=16$ in GRPO, we use $N=8$ and $G=8$ in iGRPO, yielding
\[
C_{\text{iGRPO}}
\;\approx\;
(N+G)\,C_{\text{gen}}
\;=\;
G_{\text{GRPO}}\,C_{\text{gen}}
\;\approx\;
C_{\text{GRPO}},
\]
i.e., the dominant generation cost is comparable to GRPO. 

\begin{table*}[t]
\centering
\setlength{\tabcolsep}{6pt}
\newcommand{\best}[1]{\textbf{#1}}
\newcommand{\second}[1]{\underline{#1}}

\caption{Performance comparison of 7B, 8B, and 14B models across multiple mathematical reasoning benchmarks.
Bold indicates the best and underlined the second best per column within each parameter bucket (7B, 8B, or 14B).
Our method \emph{iGRPO} rows are lightly shaded.}
\label{tab:exps_main}

\resizebox{\textwidth}{!}{%
\begin{tabular}{lrrrrrrr}
\toprule
\rowcolor{gray!15}
\textbf{Model} & \textbf{AIME25} & \textbf{AIME24} & \textbf{MATH500} & \textbf{AMC} & \textbf{GSM8K} & \textbf{Minerva} & \textbf{Avg} \\
\midrule
Nemotron-H-8B-Base-8K                   & 6.20 & 8.65 & 61.23 & 43.21 & 41.02 & 17.60 & 29.65 \\
Nemotron-H-8B-Base-8K + GRPO            & 7.78 & 9.01 & 73.13 & 45.10 & 81.93 & 29.56 & 41.08 \\
Nemotron-H-8B-Base-8K + Self-Verification~\citep{zhang2025incentivizing}
                                        & \second{8.50} & \second{9.25} & 75.60 & 46.50 & 86.20 & 31.10 & 42.86 \\
Nemotron-H-8B-Base-8K + Critique-GRPO~\citep{zhang2025critique}
                                        & 8.42 & 9.15 & \second{76.05} & \second{46.80} & \second{88.40} & \second{31.50} & \second{43.39} \\
\rowcolor{gray!10}
\textbf{Nemotron-H-8B-Base-8K + iGRPO}  & \best{9.17} & \best{9.56} & \best{78.80} & \best{48.75} & \best{91.26} & \best{32.72} & \best{45.04} \\

\midrule
DeepSeek-R1-Distill-Qwen-7B~\citep{guo2025deepseek}  & 38.60 & 54.40 & 92.80 & 90.00 & 92.00 & 39.10 & 61.93 \\
DeepSeek-R1-Distill-Qwen-7B + GRPO                   & 38.90 & 55.00 & 93.25 & 90.00 & \second{92.12} & \second{40.44} & 68.29 \\
DeepSeek-R1-Distill-Qwen-7B + Self-Verification~\citep{zhang2025incentivizing}
                                                     & 39.45 & \second{55.80} & \second{93.50} & 92.50 & 92.20 & 41.00 & 69.08 \\
DeepSeek-R1-Distill-Qwen-7B + Critique-GRPO~\citep{zhang2025critique}
                                                     & \second{39.60} & 55.65 & 93.45 & \second{92.80} & \second{92.25} & \second{41.10} & \second{69.14} \\
\rowcolor{gray!10}
\textbf{DeepSeek-R1-Distill-Qwen-7B + iGRPO (ours)}  & \best{40.16} & \best{56.30} & \best{93.80} & \best{95.00} & \best{92.42} & \best{41.54} & \best{69.87} \\
\midrule
OpenMath-Nemotron-7B~\citep{moshkov2025aimo}         & 61.18 & 73.28 & 95.55 & \second{95.00} & 90.52 & 33.46 & 74.83 \\
OpenMath-Nemotron-7B + GRPO                          & \second{61.32} & \second{73.62} & \second{95.60} & \second{95.00} & \second{90.60} & \second{34.00} & \second{75.02} \\
\rowcolor{gray!10}
\textbf{OpenMath-Nemotron-7B + iGRPO (ours)}         & \best{62.45} & \best{74.79} & \best{96.00} & \best{97.50} & \best{90.75} & \best{34.94} & \best{76.07} \\

\midrule
DeepSeek-R1-Distill-Qwen-14B~\citep{guo2025deepseek}           & 42.10 & 58.93 & 93.10 & 90.00 & 93.10 & 45.59 & 70.47 \\
DeepSeek-R1-Distill-Qwen-14B + GRPO                            & \second{43.70} & \second{60.26} & \second{93.10} & \second{91.20} & \second{93.50} & \second{46.00} & \second{71.29} \\
\rowcolor{gray!10}
\textbf{DeepSeek-R1-Distill-Qwen-14B + iGRPO (ours)}           & \best{45.52} & \best{64.06} & \best{94.00} & \best{93.45} & \best{94.00} & \best{47.06} & \best{73.02} \\
\midrule
OpenMath-Nemotron-14B~\citep{moshkov2025aimo}                  & 61.18 & 73.28 & 95.55 & \second{95.00} & 94.01 & 33.46 & 75.41 \\
OpenMath-Nemotron-14B + GRPO                                   & \second{64.53} & \second{74.79} & \second{96.00} & \second{95.00} & \second{94.40} & \second{35.70} & \second{76.73} \\
\rowcolor{gray!10}
\textbf{OpenMath-Nemotron-14B + iGRPO (ours)}                  & \best{65.57} & \best{76.72} & \best{96.70} & \best{97.50} & \best{94.69} & \best{36.76} & \best{78.00} \\

\bottomrule
\end{tabular}%
}
\end{table*}
\section{Experiments}
\label{sec:experiments}
\subsection{Setup}
Our training data includes two datasets: MATH~\citep{hendrycks2021measuring} 
(7,500 step-by-step competition problems) and AceReason-Math~\citep{chen2025acereason} (9,400 problems).
All models are trained for one epoch, with a KL divergence loss coefficient of 0 and no entropy regularization. 
We use a learning rate of $1\times10^{-6}$ with a cosine schedule, generating 
eight completions per prompt (halved in each stage for iGRPO). The maximum prompt length is 1,024 tokens across all datasets. 
For all experiments, the completion length is capped at 4,096 tokens and the batch size at 1024. 
We evaluate model performance on well-known mathematical benchmarks, including AIME24/AIME25, MATH500, AMC23, GSM8K, and Minerva Math. 
For all benchmarks we report Pass@1 accuracy. However, for AIME24/AIME25 the reported value is averaged over 64 runs to ensure robustness. 
For other benchmarks, an average of 8 runs are reported. For evaluations, we use NeMo-Skills framework\footnote{\url{https://github.com/NVIDIA/NeMo-Skills}} 
with decoding parameters such as a temperature of 0.6, top-p of 0.95, and generation length of 65,000. Additional training and evaluation details are provided in the supplementary materials.

\subsection{Results}
\label{sec:controlled}

Our goal in this section is to isolate the practical benefit of adding a self-feedback refinement stage to GRPO. Concretely, we ask whether the two-stage training signal in iGRPO translates into better \emph{verifiable} mathematical reasoning, and whether those gains persist across model families and scales.

\paragraph{Controlled study with matched sampling budget.}
Table~\ref{tab:exps_main} compares \emph{iGRPO} against vanilla GRPO and two recent self-improvement baselines, Self-Verification~\citep{zhang2025incentivizing} and Critique-GRPO~\citep{zhang2025critique}, across model families and parameter scales (7B, 8B, 14B).
All methods share the same training protocol and the same rule-based reward (Eq.~\ref{eq:reward}).
To ensure a fair compute comparison, we keep the total sampling budget fixed at eight completions per prompt for every method; iGRPO redistributes this budget across stages by using Stage~1 drafts to select self-feedback and Stage~2 completions to perform the GRPO-style update.
As a result, differences in Table~\ref{tab:exps_main} reflect the effect of \emph{conditioning on the best draft} rather than the effect of more sampling.

\paragraph{Generalist 8B model: the largest gains from self-feedback.}
For \textbf{Nemotron-H-8B-Base-8K}, vanilla GRPO raises the macro-average from $29.65\%$ to $41.08\%$.
Self-Verification and Critique-GRPO further improve the average to $42.86\%$ and $43.39\%$.
iGRPO performs best at $45.04\%$, which is \textbf{+3.96} points over GRPO and \textbf{+1.65} points over the strongest self-improvement baseline.
The improvements are most visible on the benchmarks that most strongly penalize near-miss reasoning, including AIME25 ($9.17\%$) and Minerva ($32.72\%$), and iGRPO also reaches the highest GSM8K accuracy ($91.26\%$).
This is a setting where avoiding additional self-judgment tasks matters: iGRPO does not require the model to generate critiques or verification rationales; it provides the best draft as a direct, high-signal scaffold and trains the policy to refine beyond it.

\paragraph{Stronger 7B distilled reasoner: consistent gains concentrated on multi-step tasks.}
For \textbf{DeepSeek-R1-Distill-Qwen-7B}, the base model is already strong ($61.93\%$ average), and GRPO improves it to $68.29\%$.
Self-Verification and Critique-GRPO reach $69.08\%$ and $69.14\%$.
iGRPO remains best overall at $69.87\%$.
The gains concentrate on multi-step benchmarks where a mostly-correct attempt can fail due to a small late error (e.g., AIME24 at $56.30\%$ and AMC at $95.00\%$).
This pattern fits the iGRPO mechanism: Stage~1 increases the chance that a strong reasoning trajectory appears in the context, and Stage~2 learns to reliably ``finish the job'' under the same verifiable reward.

\paragraph{Math-specialized 7B model: improvements persist when the base is already strong.}
For \textbf{OpenMath-Nemotron-7B}, the base model starts at $74.83\%$ average.
GRPO yields a small change to $75.02\%$, suggesting limited headroom from standard one-shot RL updates in this regime.
iGRPO increases performance to $76.07\%$, with the most notable gains on the harder benchmarks, including AIME24 (from $73.28\%$ to $74.79\%$) and AMC (from $95.00\%$ to $97.50\%$).
When drafts are already high-quality, the value of iGRPO is less about discovering a viable solution mode and more about systematically reinforcing the most reliable completion patterns.

\paragraph{Scaling to 14B parameters: benefits persist on complex reasoning.}
At the 14B scale, iGRPO continues to improve on GRPO across both model families.
For \textbf{DeepSeek-R1-Distill-Qwen-14B}, the macro-average increases from $71.29\%$ (GRPO) to $73.02\%$ (iGRPO), with a large gain on AIME24 from $60.26\%$ to $64.06\%$.
For \textbf{OpenMath-Nemotron-14B}, the macro-average improves from $76.73\%$ (GRPO) to $78.00\%$ (iGRPO), including gains on AIME25 from $64.53\%$ to $65.57\%$ and on AIME24 from $74.79\%$ to $76.72\%$.
While margins naturally shrink as models become stronger, the improvement pattern remains stable: iGRPO helps larger models correct residual errors that survive strong pretraining and standard RL fine-tuning, especially on long-horizon competition benchmarks.

\paragraph{Competitiveness against critique-style objectives.}
Self-Verification and Critique-GRPO are strong baselines, but they ask the model to allocate capacity to additional behaviors (verifying, critiquing, or producing auxiliary text) that are only indirectly optimized by the outcome reward.
iGRPO keeps the training loop tightly aligned with the verifiable objective while still capturing the benefit of iteration: Stage~1 supplies a high-quality draft as an explicit conditioning signal, and Stage~2 directly optimizes a refinement policy that can surpass that draft.
Empirically, this design yields consistent improvements across model families and scales, with the strongest gains on benchmarks that are sensitive to small long-horizon reasoning mistakes.

\subsection{Generalization to a Stronger Base and Harder Dataset}
\label{sec:openreasoning}

All prior controlled comparisons were trained on the 7{,}500-problem MATH set.
We\begin{wrapfigure}{r}{0.52\columnwidth}
  \centering
  \includegraphics[width=0.50\columnwidth]{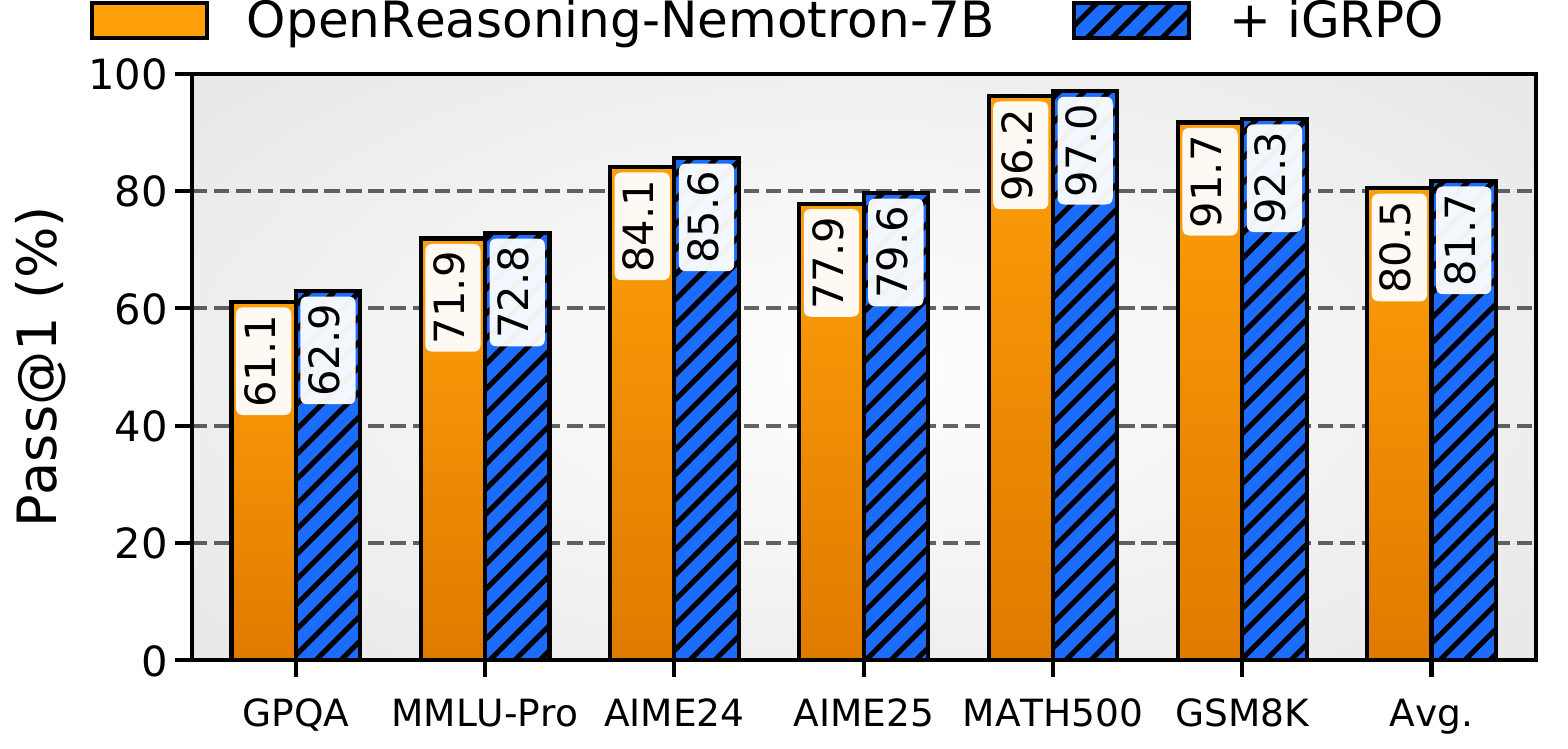}
  \caption{Pass@1 results for OpenReasoning-Nemotron-7B with and without iGRPO. Improvements appear not only on math but also on general reasoning tasks such as MMLU-Pro and GPQA.}
  \label{fig:openreasoning_bars}
\end{wrapfigure}
 next evaluate iGRPO in a harder regime by both strengthening the initialization and shifting to a more challenging training distribution. We start from \textbf{OpenReasoning-Nemotron-7B} and train on \textbf{AceReason-Math}~\citep{chen2025acereason}, keeping the iGRPO configuration identical to Section~\ref{sec:experiments}; Figure~\ref{fig:openreasoning_bars} summarizes the results. iGRPO improves performance across all benchmarks, with the strongest gains on AIME24/25 (\textbf{+1.52}/\textbf{+1.78}) and clear transfer beyond math to GPQA (\textbf{+1.84}) and MMLU-Pro (\textbf{+0.91}), yielding a \textbf{+1.23} overall average improvement.
This indicates iGRPO remains effective with a stronger base model and harder training data, and that iterative self-feedback improves broadly useful refinement behaviors rather than only math-specific patterns.

\section{Ablation}

\paragraph{Beyond GRPO: self-feedback as a reusable refinement wrapper.}
\begin{wraptable}{r}{0.45\textwidth}
\vspace{-1em}
\centering
\footnotesize
\setlength{\tabcolsep}{5pt}
\renewcommand{\arraystretch}{0.8}
\begin{tabular}{lccc}
\toprule
\textbf{Method} & \textbf{Base Avg} & \textbf{+ iGRPO Avg} & \textbf{$\Delta$} \\
\midrule
DAPO & 69.74 & \textbf{70.93} & $\boldsymbol{+1.19}$ \\
GSPO & 69.20 & \textbf{70.31} & $\boldsymbol{+1.11}$ \\
\bottomrule
\end{tabular}
\caption{\footnotesize
\textbf{Beyond GRPO:} adding the same self-feedback refinement layer to other methods consistently improves the average Pass@1.}
\label{tab:beyond_grpo}
\vspace{-1em}
\end{wraptable}
Our two-stage procedure is not tied to the GRPO objective itself: it can be layered on top of other
group-based PPO variants by using Stage~1 to select a high-reward draft and Stage~2 to perform the
base optimizer’s update on self-conditioned prompts.
Table~\ref{tab:beyond_grpo} applies this wrapper to DAPO and GSPO under matched rollout budgets
(same total sampled completions per prompt, same reward, and identical training and evaluation settings).
In both cases, self-feedback yields a consistent \textbf{+1.1 to +1.2} macro-average improvement, indicating that
the gains primarily stem from the refinement interface rather than GRPO-specific details.

\paragraph{Generative judge study.}
\vspace{0.5em}
\begin{wraptable}{r}{0.48\textwidth}
\vspace{-1em}
\centering
\footnotesize
\setlength{\tabcolsep}{3pt}
\renewcommand{\arraystretch}{0.7}
\begin{tabular}{lccc}
\toprule
\textbf{Benchmark} &
\textbf{Rule-based} &
\textbf{GPT-5 Judge} &
$\boldsymbol{\Delta}$ \\
\midrule
AIME25   & 40.16 & \textbf{41.12} & $\boldsymbol{+0.96}$ \\
AIME24   & 56.30 & \textbf{57.45} & $\boldsymbol{+1.15}$ \\
MATH500  & 93.80 & \textbf{94.20} & $\boldsymbol{+0.40}$ \\
AMC      & 95.00 & \textbf{96.25} & $\boldsymbol{+1.25}$ \\
GSM8K    & 92.42 & \textbf{92.95} & $\boldsymbol{+0.53}$ \\
Minerva  & 41.54 & \textbf{42.88} & $\boldsymbol{+1.34}$ \\
\midrule
Average  & 69.87 & \textbf{70.81} & $\boldsymbol{+0.94}$ \\
\bottomrule
\end{tabular}
\caption{\footnotesize
Effect of replacing the rule-based reward with a GPT-5 judge in iGRPO. Base model is DeepSeek R1 Distill Qwen 7B.}
\label{tab:igrpo_generative_judge}
\vspace{-1em}
\end{wraptable}

Because iGRPO only needs a scalar reward to (i) rank Stage~1 drafts and (ii) compute Stage~2 group-normalized advantages, we can swap the binary outcome checker (Eq.~\ref{eq:reward}) for a generative judge. On DeepSeek-R1-Distill-Qwen-7B trained on MATH, GPT-5 scoring each solution in $[0,1]$ improves Pass@1 on all six benchmarks and increases the average from 69.87 to 70.81 (\textbf{+0.94}; Table~\ref{tab:igrpo_generative_judge}). The largest gains on AIME24/25 and Minerva are consistent with partial credit for near-miss traces, which lets them survive Stage~1 selection and be refined into correct answers in Stage~2.

\paragraph{Entropy analysis.}
\begin{wrapfigure}{r}{0.50\columnwidth}
\vspace{-1em}
\centering
\begin{tikzpicture}
\begin{axis}[
    width=0.48\columnwidth,
    height=4.2cm,
    xlabel={Training steps (\%)},
    ylabel={Entropy (nats)},
    xmin=0, xmax=100,
    ymin=0.35, ymax=2.50,
    ytick distance=0.5,
    xtick distance=20,
    grid=major,
    grid style={dashed, gray!25},
    tick label style={font=\footnotesize},
    label style={font=\footnotesize},
    legend style={
        font=\footnotesize,
        fill=white,
        fill opacity=0.85,
        draw=gray!40,
        rounded corners=1pt,
        at={(0.98,0.98)},
        anchor=north east,
        cells={anchor=west},
        inner xsep=3pt,
        inner ysep=2pt,
    },
    clip=false,
]
% iGRPO
\addplot[color=myteal, line width=1.2pt, smooth, tension=0.7]
coordinates {(0,2.45) (15,0.8) (30,0.48) (60,0.46) (100,0.44)};
\addlegendentry{iGRPO}

% GRPO
\addplot[color=myorange, line width=1.2pt, dashed, smooth, tension=0.7]
coordinates {(0,2.45) (10,0.6) (30,0.42) (60,0.42) (100,0.42)};
\addlegendentry{GRPO}
\end{axis}
\end{tikzpicture}
\caption{\footnotesize
\textbf{Entropy dynamics.} Per-token policy entropy during training. iGRPO maintains higher mid-training entropy than GRPO, indicating sustained exploration before convergence.}
\label{fig:entropy_plot}
\vspace{-1em}
\end{wrapfigure}
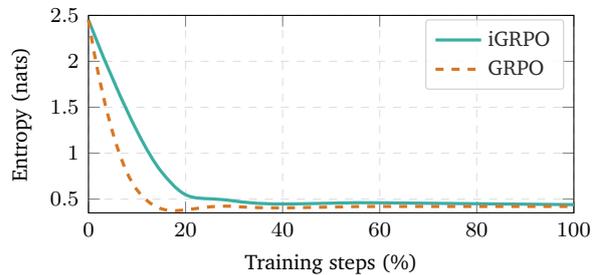

To probe how self-feedback alters learning dynamics, we track the per-token Shannon entropy of the policy during RL. For a decoding step $t$ with context $h_t$ and vocabulary $\mathcal{V}$, we measure entropy in nats as
\begin{equation}
\mathcal{H}\bigl(\pi_\theta(\cdot \mid h_t)\bigr)
= - \sum_{w \in \mathcal{V}} \pi_\theta(w \mid h_t)\,\ln \pi_\theta(w \mid h_t),
\end{equation}
and in practice compute it from the log-softmax of the logits (base $e$), then average over all valid completion tokens in the batch to obtain a single scalar per training step. On DeepSeek-R1-Distill-Qwen-7B trained on MATH, both methods start at $2.45$ nats, but GRPO collapses rapidly ($0.60$ at $10\%$, $0.42$ by $30\%$) and then stays flat through the end. In contrast, iGRPO decays more gradually ($0.80$ at $15\%$, $0.48$ at $30\%$) and remains slightly higher through mid-training ($0.46$ at $60\%$) before converging near GRPO ($0.44$ vs.\ $0.42$ at $100\%$). This suggests iGRPO delays premature mode collapse: conditioning on the best draft encourages refinement around a strong scaffold while preserving alternative continuations long enough to recover from near-miss reasoning traces. Since final entropies are close, the gains are better explained by sustained mid-training exploration rather than higher randomness at convergence.

\section{Conclusion}
\label{sec:conclusion}

We introduced \textbf{Iterative Group Relative Policy Optimization (iGRPO)}, a simple and effective extension of GRPO that injects an explicit \emph{self-feedback} signal into outcome-driven RL for reasoning. iGRPO replaces single-shot optimization with a two-stage loop: Stage~1 samples multiple drafts and selects the highest-reward completion as feedback; Stage~2 conditions on this best draft and applies a standard GRPO-style update on refinements. This yields \emph{dynamic self-conditioning}: the training context automatically improves as the policy improves, creating a bootstrapped refinement behavior that is absent from conventional group-based objectives. We also provided a clean theoretical characterization of the bootstrapping effect under binary rewards, showing that the expected quality of the selected draft increases monotonically with the policy's success probability.

Empirically, iGRPO consistently improves verifiable math reasoning across model families and scales under matched rollout budgets. Across 7B, 8B, and 14B backbones, iGRPO outperforms vanilla GRPO as well as strong self-improvement baselines that rely on critique or verification behaviors (Table~\ref{tab:exps_main}). In a harder generalization setting, training OpenReasoning-Nemotron-7B on AceReason-Math with iGRPO yields new best results on AIME24/AIME25 (85.62\%/79.64\%) and transfers gains beyond math to GPQA and MMLU-Pro (Figure~\ref{fig:openreasoning_bars}). Our ablations further suggest that the benefit primarily comes from the refinement interface itself: the same two-stage wrapper improves other group-based PPO variants (Table~\ref{tab:beyond_grpo}), remains compatible with richer scalar rewards such as a generative judge (Table~\ref{tab:igrpo_generative_judge}), and measurably alters learning dynamics by delaying premature entropy collapse (Figure~\ref{fig:entropy_plot}).

\bibliography{iclr2026_conference}
\bibliographystyle{iclr2026_conference}

\newpage
\section*{Appendix}
\appendix
\renewcommand{\thesection}{\Alph{section}}
\renewcommand\thefigure{S.\arabic{figure}}
\setcounter{figure}{0}
\renewcommand\thetable{S.\arabic{table}}
\setcounter{table}{0}

\section{Policy Gradient Derivation for iGRPO}
\label{sec:appendix_igrpo_policy_gradient}

In this sectin, we derive the iGRPO policy gradient update consistent with the methodology in
Section~\ref{sec:igrpo_method} and the full objective in Eq.~\eqref{eq:igrpo_objective}.
Throughout, $\pi_\theta$ is the trainable policy, $\pi_{\theta_{\mathrm{old}}}$ is the frozen
snapshot used for sampling (PPO/GRPO convention), $\pi_{\mathrm{ref}}$ is the reference policy,
and $R_\phi(\cdot)$ is a scalar (verifier) reward. Importantly, iGRPO uses two stages per
optimization step, but gradients are applied only to Stage~2 tokens.

\subsection{Two-stage sampling and the induced self-conditioned prompt distribution}

For each prompt $q \sim P(Q)$, iGRPO constructs an augmented prompt by first sampling
Stage~1 drafts and selecting the best:
\begin{align}
d_i &\sim \pi_{\theta_{\mathrm{old}}}(\cdot \mid q), \quad i=1,\ldots,N, \\
\hat d &= \arg\max_{i \in \{1,\ldots,N\}} R_\phi(d_i), \\
q' &= \mathrm{Concat}(q,\hat d).
\end{align}
This defines an \emph{implicit} sampling distribution over augmented prompts $q'$ induced by
$\pi_{\theta_{\mathrm{old}}}$ and the $\arg\max$ selection. Within a single PPO-style update,
$q'$ is treated as part of the sampled context (no differentiation through Stage~1 and no
differentiation through the $\arg\max$). Across iterations, the distribution over $q'$ shifts
because $\pi_{\theta_{\mathrm{old}}}$ changes.

Given the augmented prompt $q'$, Stage~2 samples a group of $G$ completions:
\begin{equation}
o_j \sim \pi_{\theta_{\mathrm{old}}}(\cdot \mid q'), \quad j=1,\ldots,G.
\end{equation}
Let each completion be a token sequence $o_j = (o_{j,1},\ldots,o_{j,|o_j|})$ with factorization
\begin{equation}
\pi_\theta(o_j \mid q') \;=\; \prod_{t=1}^{|o_j|} \pi_\theta(o_{j,t} \mid q', o_{j,<t}).
\label{eq:token_factorization}
\end{equation}

\subsection{From the self-conditioned expected reward to a REINFORCE-style gradient}

Fix an augmented prompt $q'$. Consider the (conceptual) self-conditioned objective
\begin{equation}
\mathcal{J}( \theta \mid q')
\;=\;
\mathbb{E}_{o \sim \pi_\theta(\cdot \mid q')}\bigl[ R_\phi(o) \bigr].
\label{eq:conceptual_objective}
\end{equation}
By the score-function (REINFORCE) identity,
\begin{align}
\nabla_\theta \mathcal{J}(\theta \mid q')
&=
\mathbb{E}_{o \sim \pi_\theta(\cdot \mid q')}
\Bigl[
R_\phi(o)\,\nabla_\theta \log \pi_\theta(o \mid q')
\Bigr] \\
&=
\mathbb{E}_{o \sim \pi_\theta(\cdot \mid q')}
\Biggl[
R_\phi(o)\,
\sum_{t=1}^{|o|}
\nabla_\theta \log \pi_\theta(o_t \mid q', o_{<t})
\Biggr],
\label{eq:reinforce_expand}
\end{align}
where the second line uses Eq.~\eqref{eq:token_factorization}.
Introducing any baseline $b(q')$ that does not depend on the sampled tokens preserves
unbiasedness and yields an advantage-weighted gradient:
\begin{equation}
\nabla_\theta \mathcal{J}(\theta \mid q')
=
\mathbb{E}_{o \sim \pi_\theta(\cdot \mid q')}
\Biggl[
\bigl(R_\phi(o)-b(q')\bigr)
\sum_{t=1}^{|o|}
\nabla_\theta \log \pi_\theta(o_t \mid q', o_{<t})
\Biggr].
\label{eq:reinforce_baseline}
\end{equation}

\subsection{Group-relative advantage used in iGRPO}

In iGRPO, the baseline is estimated per prompt by sampling a \emph{group} of $G$ completions
$\{o_j\}_{j=1}^G$ and normalizing rewards within the group. Define
\begin{equation}
R_j \;=\; R_\phi(o_j), \qquad
\overline{R} \;=\; \mathrm{mean}(\{R_1,\ldots,R_G\}), \qquad
s_R \;=\; \mathrm{std}(\{R_1,\ldots,R_G\}),
\end{equation}
and the iGRPO advantage
\begin{equation}
\hat A_j
\;=\;
\frac{R_j - \overline{R}}{s_R},
\qquad
\text{with the convention } \hat A_j = 0 \text{ if } s_R = 0.
\label{eq:appendix_advantage}
\end{equation}
As in GRPO, this advantage is a single scalar per completion, shared across all token indices.
iGRPO additionally uses the token-averaged form (the $1/|o_j|$ factor) so that completions of
different lengths contribute comparably:
\begin{equation}
g_{\mathrm{on\text{-}policy}}(\theta \mid q')
\;\doteq\;
\frac{1}{G}\sum_{j=1}^G
\frac{\hat A_j}{|o_j|}
\sum_{t=1}^{|o_j|}
\nabla_\theta \log \pi_\theta(o_{j,t} \mid q', o_{j,<t}).
\label{eq:onpolicy_group_gradient_estimator}
\end{equation}
Eq.~\eqref{eq:onpolicy_group_gradient_estimator} is the basic (conceptual) iGRPO policy-gradient
estimator when sampling from $\pi_\theta$. The remainder of the derivation follows the standard
PPO/GRPO stabilization used in the main methodology: sampling from $\pi_{\theta_{\mathrm{old}}}$
and using importance ratios with clipping and a KL penalty.

\subsection{Off-policy sampling from $\pi_{\theta_{\mathrm{old}}}$ and PPO-style clipping}

Within each iteration, iGRPO samples Stage~2 completions from $\pi_{\theta_{\mathrm{old}}}$ for
stability. To relate gradients under $\pi_{\theta_{\mathrm{old}}}$ to the updated policy
$\pi_\theta$, define the per-token importance ratio
\begin{equation}
r_{j,t}(\theta)
=
\frac{\pi_{\theta}(o_{j,t} \mid q', o_{j,<t})}
     {\pi_{\theta_{\mathrm{old}}}(o_{j,t} \mid q', o_{j,<t})}.
\label{eq:appendix_ratio}
\end{equation}
Following PPO/GRPO, iGRPO maximizes a clipped surrogate that replaces the on-policy factor
by a clipped importance-weighted term:
\begin{equation}
\mathcal{L}^{\mathrm{clip}}_{j,t}(\theta)
=
\min\!\Bigl(
r_{j,t}(\theta)\,\hat A_j,\;
\mathrm{clip}\bigl(r_{j,t}(\theta),\,1-\epsilon,\,1+\epsilon\bigr)\,\hat A_j
\Bigr).
\label{eq:appendix_clipped_surrogate}
\end{equation}
The per-token clipped objective in Eq.~\eqref{eq:appendix_clipped_surrogate} yields the usual
piecewise gradient behavior: when the ratio is clipped, the clipped branch is constant in
$r_{j,t}(\theta)$ and contributes zero gradient through that branch.

A convenient way to write the gradient is via the indicator of the \emph{unclipped} branch.
Let
\begin{equation}
\mathbb{I}_{j,t}(\theta)
=
\begin{cases}
1, & \hat A_j \ge 0 \text{ and } r_{j,t}(\theta) \le 1+\epsilon, \\
1, & \hat A_j < 0 \text{ and } r_{j,t}(\theta) \ge 1-\epsilon, \\
0, & \text{otherwise},
\end{cases}
\label{eq:appendix_indicator}
\end{equation}
which matches the standard PPO clipping rule. Then, using
$\nabla_\theta r_{j,t}(\theta) = r_{j,t}(\theta)\,\nabla_\theta \log \pi_\theta(o_{j,t}\mid q',o_{j,<t})$,
the gradient of the clipped surrogate is
\begin{equation}
\nabla_\theta \mathcal{L}^{\mathrm{clip}}_{j,t}(\theta)
=
\mathbb{I}_{j,t}(\theta)\;
\hat A_j\; r_{j,t}(\theta)\;
\nabla_\theta \log \pi_\theta(o_{j,t} \mid q', o_{j,<t}).
\label{eq:appendix_grad_clipped}
\end{equation}

\subsection{Including the per-token KL penalty}

As in Section~\ref{sec:background-grpo}, iGRPO includes a per-token KL penalty to a reference
policy $\pi_{\mathrm{ref}}$ using the non-negative estimator
\begin{equation}
\widehat{D}_{\mathrm{KL}}^{(j,t)}
=
\frac{\pi_{\mathrm{ref}}(o_{j,t}\mid q',o_{j,<t})}{\pi_{\theta}(o_{j,t}\mid q',o_{j,<t})}
-
\log
\frac{\pi_{\mathrm{ref}}(o_{j,t}\mid q',o_{j,<t})}{\pi_{\theta}(o_{j,t}\mid q',o_{j,<t})}
-1.
\label{eq:appendix_kl_estimator}
\end{equation}
Let $\rho_{j,t}(\theta) = \pi_{\mathrm{ref}}(o_{j,t}\mid q',o_{j,<t}) / \pi_\theta(o_{j,t}\mid q',o_{j,<t})$.
Then $\widehat{D}_{\mathrm{KL}}^{(j,t)} = \rho_{j,t} - \log \rho_{j,t} - 1$, and its gradient has a
simple form:
\begin{equation}
\nabla_\theta \widehat{D}_{\mathrm{KL}}^{(j,t)}
=
-\bigl(\rho_{j,t}(\theta)-1\bigr)\;
\nabla_\theta \log \pi_\theta(o_{j,t}\mid q',o_{j,<t}).
\label{eq:appendix_grad_kl}
\end{equation}
Therefore, the KL-regularized contribution in the objective,
$-\beta\,\widehat{D}_{\mathrm{KL}}^{(j,t)}$, contributes
\begin{equation}
-\beta\,\nabla_\theta \widehat{D}_{\mathrm{KL}}^{(j,t)}
=
\beta\bigl(\rho_{j,t}(\theta)-1\bigr)\;
\nabla_\theta \log \pi_\theta(o_{j,t}\mid q',o_{j,<t}).
\label{eq:appendix_grad_kl_penalty}
\end{equation}

\subsection{Final iGRPO surrogate objective and resulting policy gradient}

Putting the pieces together and reinstating the full two-stage sampling (where Stage~1 affects
the distribution of $q'$ but is not differentiated through within an iteration), the iGRPO
surrogate objective is:
\begin{align}
\mathcal{J}_{\mathrm{iGRPO}}(\theta)
&=
\mathbb{E}\Bigl[q \sim P(Q)\Bigr]\;
\mathbb{E}\Bigl[
\{d_i\}_{i=1}^N \sim \pi_{\theta_{\mathrm{old}}}(\cdot \mid q),\;
\hat d = \arg\max_i R_\phi(d_i),\;
q'=\mathrm{Concat}(q,\hat d), \nonumber\\
&\hspace{7.8em}
\{o_j\}_{j=1}^G \sim \pi_{\theta_{\mathrm{old}}}(\cdot \mid q')
\Bigr] \nonumber\\
&\quad\times
\frac{1}{G}\sum_{j=1}^{G}\frac{1}{|o_j|}\sum_{t=1}^{|o_j|}
\Bigl[
\mathcal{L}^{\mathrm{clip}}_{j,t}(\theta)
-
\beta\,\widehat{D}_{\mathrm{KL}}^{(j,t)}
\Bigr],
\label{eq:appendix_final_objective}
\end{align}
where $\mathcal{L}^{\mathrm{clip}}_{j,t}(\theta)$ is defined in Eq.~\eqref{eq:appendix_clipped_surrogate},
$r_{j,t}(\theta)$ in Eq.~\eqref{eq:appendix_ratio}, and $\widehat{D}_{\mathrm{KL}}^{(j,t)}$ in
Eq.~\eqref{eq:appendix_kl_estimator}. This matches the structure given in Eq.~\eqref{eq:igrpo_objective}.

Differentiating Eq.~\eqref{eq:appendix_final_objective} yields the iGRPO policy gradient:
\begin{align}
\nabla_\theta \mathcal{J}_{\mathrm{iGRPO}}(\theta)
=
\mathbb{E}\Bigl[\cdots\Bigr]\;
\frac{1}{G}\sum_{j=1}^{G}\frac{1}{|o_j|}\sum_{t=1}^{|o_j|}
\Bigl[
\nabla_\theta \mathcal{L}^{\mathrm{clip}}_{j,t}(\theta)
-
\beta\,\nabla_\theta \widehat{D}_{\mathrm{KL}}^{(j,t)}
\Bigr],
\label{eq:appendix_final_gradient}
\end{align}
with the explicit per-token forms from Eqs.~\eqref{eq:appendix_grad_clipped} and
\eqref{eq:appendix_grad_kl}. Concretely, combining them gives
\begin{align}
\nabla_\theta \mathcal{J}_{\mathrm{iGRPO}}(\theta)
=
\mathbb{E}\Bigl[\cdots\Bigr]\;
\frac{1}{G}\sum_{j=1}^{G}\frac{1}{|o_j|}\sum_{t=1}^{|o_j|}
\Bigl[
\mathbb{I}_{j,t}(\theta)\,\hat A_j\,r_{j,t}(\theta)
+
\beta\bigl(\rho_{j,t}(\theta)-1\bigr)
\Bigr]\;
\nabla_\theta \log \pi_\theta(o_{j,t}\mid q',o_{j,<t}).
\label{eq:appendix_final_gradient_explicit}
\end{align}

\paragraph{Interpretation.}
Eq.~\eqref{eq:appendix_final_gradient_explicit} makes the roles of the two stages explicit.
Stage~1 defines the self-conditioned context $q'$ (and thus the learning problem presented to
Stage~2) but does not contribute direct gradients within an iteration. Stage~2 contributes a
standard GRPO/PPO-style token-level policy gradient, where each token is weighted by a
group-normalized advantage $\hat A_j$ (shared across tokens of the completion), stabilized by
importance-ratio clipping, and regularized by a KL penalty to $\pi_{\mathrm{ref}}$.

\begin{figure}[t]
    \centering
    \includegraphics[width=0.65\textwidth]{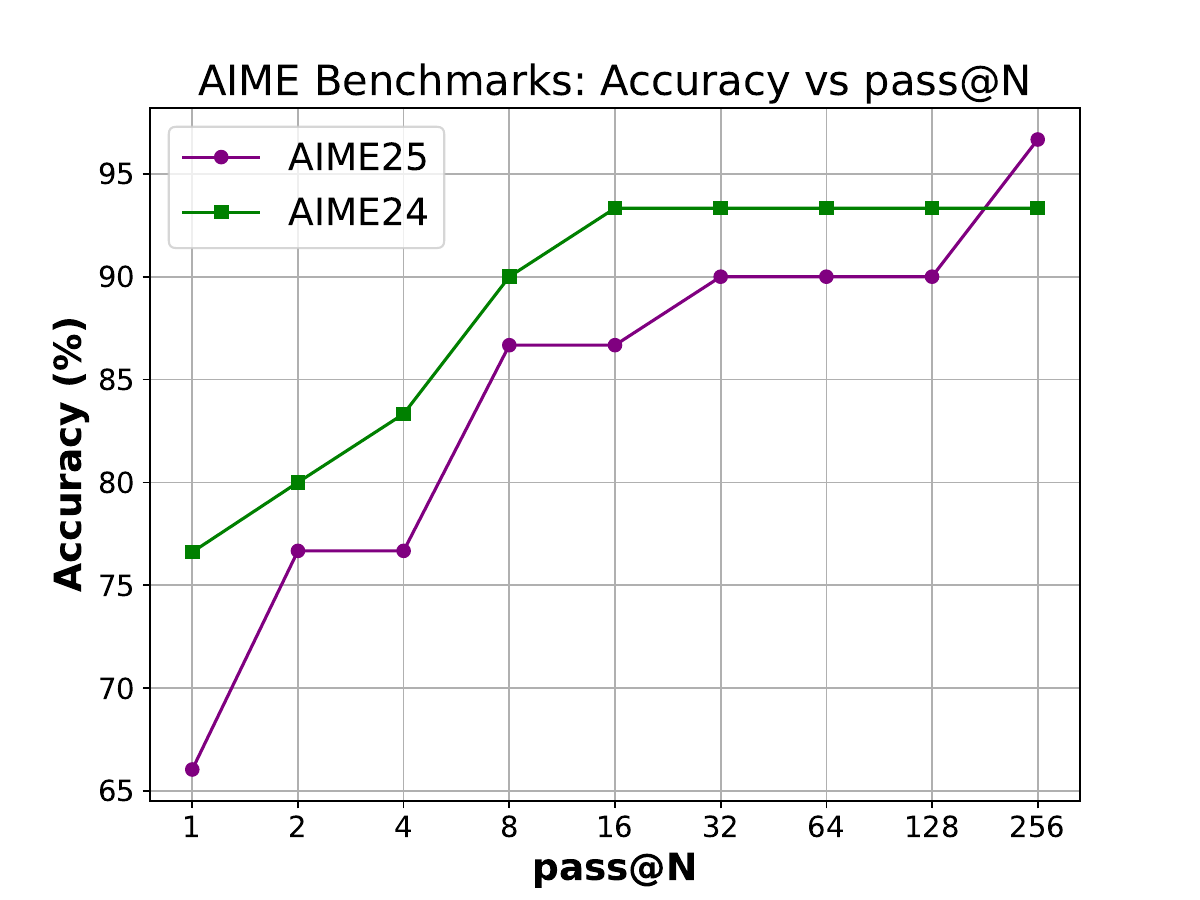}
    \caption{
        Performance of iOpenMath-Nemotron-14B across various pass@N settings for AIME24 and AIME25. 
        Both benchmarks exhibit increasing accuracy with higher \(N\), though AIME24 quickly stabilizes 
        at 93.33\% by \(N=16\), whereas AIME25 continues to rise until reaching 96.67\% at \(N=256\).
    }
    \label{fig:pass_n_aime}
\end{figure}

\begin{table*}[t]
\centering
\caption{Performance comparison of OpenMath-Nemotron-14B and iGRPO-enhanced variant.}
\label{tab:openmath_app}
\resizebox{\textwidth}{!}{%
\begin{tabular}{lccccccc}
\hline
\rowcolor{gray!20}
\textbf{Model} & \textbf{AIME25} & \textbf{AIME24} & \textbf{MATH500} & \textbf{AMC} & \textbf{GSM8K} & \textbf{Minerva} & \textbf{Avg} \\
\hline
OpenMath-Nemotron-14B~\citep{moshkov2025aimo}       & 61.18 & 73.28 & 95.55 & 95.00 & 94.01 & 33.46 & 75.41 \\
\rowcolor{gray!15}
\textbf{OpenMath-Nemotron-14B + iGRPO}        & \textbf{66.04} & \textbf{76.61} & \textbf{96.90} & \textbf{97.50} & \textbf{94.16} & \textbf{38.24} & \textbf{78.24} \\
\hline
\end{tabular}%
}
\end{table*}

\section{Scaling OpenMath-Nemotron-14B with iGRPO}
To further validate the effectiveness of iGRPO, we trained the OpenMath-Nemotron-14B with iGRPO on large scale dataset of OpenR1-Math-220k~\citep{openr1_math_220k} which consists of 220,000 math problems with reasoning traces from DeepSeek-R1. For this study, we use 94,000 examples. As shown in Table~\ref{tab:openmath_app}, the reasoning performance of the model trained with iGRPO is significantly improved, achieving an impressive AIME25 score of 66.04\%. 

\subsection{Analysis of Pass@N on AIME Benchmarks}
Figure~\ref{fig:pass_n_aime} shows how the accuracy of iOpenMath-Nemotron-14B evolves when increasing the number of attempts \(N\) on the AIME24 and AIME25 benchmarks. As expected, we see consistent gains in performance as \(N\) grows, indicating that the model can generate correct solutions among multiple sampled responses even if the top-1 guess is sometimes incorrect. Although our base SFT model already shows strong results at pass@1, the additional RL fine-tuning appears to capitalize on multi-sample scenarios. For example, on AIME25, the model improves from 66.04\% (pass@1) to 86.67\% at pass@8, demonstrating the effectiveness of producing multiple solutions for challenging competition problems.

Despite these gains, there are clear saturation points. AIME24 converges to its best score of 93.33\% by \(N=16\), with no further improvement at higher \(N\). On the contrary, the performance of AIME25 continues to improve even at high values of \(N\). While the performance seems to plateau briefly at 90.00\% between \(N=32\) and \(N=128\), it eventually increases to 96.67\% at \(N=256\). This contrast suggests that certain problem distributions, especially in AIME25, may benefit from a larger number of sampled attempts, whereas others, such as those in AIME24, can be adequately solved with fewer solution attempts.

\begin{table}[t]
\centering
\caption{Hyperparameters for training 7B models with iGRPO using 2 nodes each using 8 × NVIDIA A100 GPUs. One node is entirely reserved for vllm.}
\begin{tabular}{ll}
\toprule
\textbf{Parameter} & \textbf{Value} \\
\midrule
bf16 & true \\
attn\_implementation & flash\_attention\_2 \\
use\_vllm & true \\
vllm\_gpu\_memory\_utilization & 0.85 \\
gradient\_accumulation\_steps & 8 \\
gradient\_checkpointing & true \\
learning\_rate & 1e-06 \\
lr\_scheduler\_type & cosine\_with\_min\_lr \\
lr\_scheduler\_kwargs & min\_lr\_rate: 0.1 \\
warmup\_ratio & 0.1 \\
num\_train\_epochs & 1 \\
per\_device\_train\_batch\_size & 16 \\
max\_completion\_length & 4096 \\
num\_generations & 4 \\
temperature & 0.7 \\
reward\_funcs & accuracy, format \\
reward\_weights & 1.0, 1.0 \\
\bottomrule
\end{tabular}
\label{tab:hypers_7b}
\end{table}

\section{Hyperparameter Setup}
\label{sec:hyperparams}

We conduct ablation studies on both 7B and 14B model variants, spanning core architectures such as DeepSeek-R1-Distill-Qwen and OpenMath-Nemotron. All models are trained for one epoch across two datasets: (1) the MATH dataset~\citep{hendrycks2021measuring} of 7,500 step-by-step problems, and (2) AceReason‑Math~\citep{chen2025acereason} dataset (9400 problems). Our training uses a KL divergence loss coefficient of 0. We set a cosine learning rate schedule (minimum rate of 0.1) with a base learning rate of \(1\times10^{-6}\). We use 8 rollouts for all experiments. 

Table~\ref{tab:hypers_7b} lists the concrete hyperparameters for training 7B iGRPO models. Notably, we run on 2 nodes with 8 $\times$ NVIDIA A100 GPUs each, and one of these nodes is fully allocated to vLLM for generation. We keep a global batch size of 128, with a per-device batch size of 16 and a gradient accumulation step size of 8. For the 14B models, we scale out to 5 nodes of 8 $\times$ NVIDIA A100 GPUs each (again, one node reserved for vLLM) and reduce the per-device batch size to 4 (maintaining the same global batch size of 128). In both 7B and 14B setups, we continue to use bfloat16 precision and the FlashAttention-2 kernel. The temperature is set to 0.7 for generation, and we apply two reward functions (accuracy and format) each with weight~1.0. This configuration provides a balanced trade-off between training stability, throughput, and alignment with complex mathematical reasoning tasks.

\paragraph{Prompt:} We use the following prompt for training model with iGRPO.

\begin{takeaway}{teal}{Prompt}
{You are a helpful AI Assistant that provides well-reasoned and detailed responses. First think through your reasoning as an internal monologue, then give the answer: \textless think\textgreater …\textless /think\textgreater\textless answer\textgreater …\textless /answer\textgreater. If the prompt contains feedback or a prior draft, treat it as guidance, not as ground truth. Use it to produce a strictly improved final answer: fix mistakes, fill gaps, strengthen justification, and improve clarity. Do not repeat the draft verbatim. If the feedback is wrong or incomplete, correct it and proceed.}
\end{takeaway}

\section{Memory and Throughput Comparisons}
\subsection{Setup}

To evaluate the resource utilization of our training setup, we replicate the exact environment and conditions under which our 7B models are typically trained. Specifically, we use DeepSeek-R1-Distill-Qwen-7B as our base model, which serves as a representative checkpoint for measuring throughput and memory consumption when training on the MATH dataset. Our training configuration employs a per-device batch size of 16, along with a global gradient accumulation step of 8, allowing us to effectively simulate heavier loads without exceeding GPU memory constraints. Additionally, we use a maximum completion length of 2048 tokens to benchmark model performance. We run experiments on two nodes, each equipped with 8 × NVIDIA A100 GPUs. One node is dedicated to vllm generation, ensuring that inference or generation processes do not interfere with the primary training workload, while the other node is reserved exclusively for model training.

We measure peak memory usage by periodically querying the GPU memory allocator for the maximum memory allocation that has occurred since the start of training. Specifically, at the beginning of training, we reset the peak memory statistics, and then after each iteration, we retrieve the current peak memory usage in bytes. We convert this value to gigabytes for readability and log it alongside other training metrics. To measure throughput, we track the total number of samples processed over time. We calculate this by multiplying the current global step by both the per-device batch size and the number of devices used in data parallelism. Dividing this product by the elapsed training time in seconds yields the throughput, expressed as samples processed per second. This real-time monitoring of memory and throughput allows us to evaluate hardware utilization efficiency, identify possible bottlenecks, and compare different training configurations in a consistent and quantifiable manner.

\subsection{Measurements}

Table~\ref{tab:mem_throughput} presents measured GPU usage and training throughput under iGRPO vs GRPO. Despite the two-stage nature of iGRPO, its peak memory usage of 54.9349\,GB closely matches GRPO’s 54.9286\,GB, a difference of roughly 0.0063\,GB, which is practically negligible. This matches our theoretical expectation that the self-feedback mechanism adds minimal overhead, validating the feasibility of integrating iterative refinements even under constrained resource budgets.

Regarding throughput, iGRPO processes 0.34\,samples/s compared to GRPO’s 0.41\,samples/s, reflecting a mild slowdown tied to the additional round of generation. Crucially, this is neither an order-of-magnitude nor a large factor reduction. Instead, it shows that iGRPO’s second-stage refinement imposes only a modest computational cost. In summary, these measurements confirm our claims that iGRPO can be implemented with little additional overhead, supporting it as a practical strategy for enhancing mathematical reasoning performance without compromising resource efficiency.

Beyond instantaneous throughput, we also report full training cost measured in total GPU hours. Under the same compute budget and eight generations per prompt, GRPO requires 83.3 GPU hours while iGRPO uses 94.1 GPU hours, which corresponds to roughly a 13\% increase in wall-clock training time. This overhead arises from the sequential Stage~1 plus Stage~2 decoding but does not demand more GPUs or additional memory capacity, since peak usage remains essentially unchanged. Given that this modest time increase delivers several-point gains on AIME24 and AIME25 and enables our 7B models to reach state-of-the-art performance, we view the tradeoff between 13\% extra training time and substantially higher reasoning accuracy as a favorable and practical value proposition in real deployments.

\begin{table}[h]
\centering
\resizebox{0.57\linewidth}{!}{%
\begin{tabular}{lccc}
\toprule
\textbf{Method} & \textbf{Peak Memory (GB)} & \textbf{Throughput (Samples/sec)} & \textbf{Total GPU Hours} \\
\midrule
GRPO  & 54.9286 & 0.41 & 83.3 \\
iGRPO & 54.9349 & 0.34 & 94.1 \\
\bottomrule
\end{tabular}
}
\caption{Empirical memory usage and throughput on an 80\,GB A100 setup. iGRPO’s two-stage approach yields near-identical peak memory usage and only a minor decrease in throughput compared to GRPO. The last column reports total GPU hours for a full training run, showing that iGRPO adds only a modest 13\% time overhead relative to GRPO.}
\label{tab:mem_throughput}
\end{table}

\section{Additional Ablation Studies}

\paragraph{Training Dynamics and Response Length.}
As shown in Fig.~\ref{fig:reward_length}, we compare average rewards for iGRPO and GRPO at multiple checkpoints, observing that iGRPO consistently maintains a higher reward throughout training. The iterative refinement in iGRPO ultimately yields a superior reward trajectory. In addition, we measure the response length over training steps and find that both methods exhibit nearly identical lengths, with GRPO producing slightly longer outputs on average. Notably, iGRPO’s two-stage process does not manifest in lengthy completions but instead appears to refine solutions within a similar token budget. This indicates that the gains from iterative refinement arise more from improved response quality than from verbosity.

\begin{figure}[t]
  \centering
  \begin{subfigure}[b]{0.48\linewidth}
    \centering
    \includegraphics[width=\linewidth]{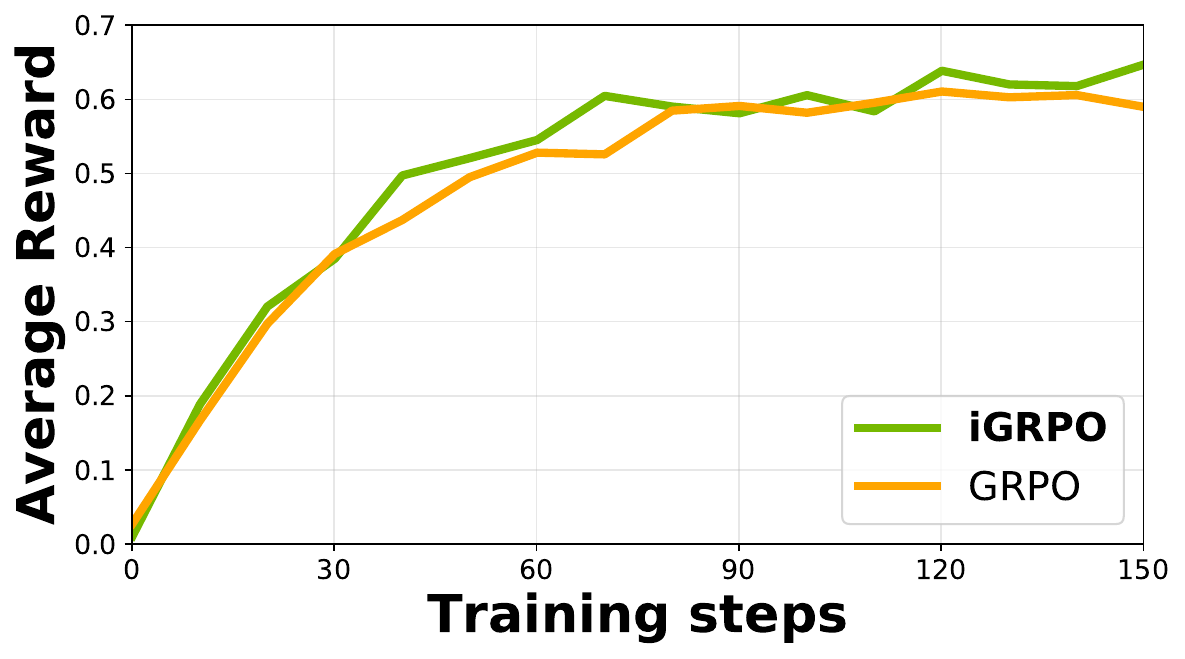}
    \caption{Average training reward.}
    \label{fig:reward}
  \end{subfigure}\hfill
  \begin{subfigure}[b]{0.48\linewidth}
    \centering
    \includegraphics[width=\linewidth]{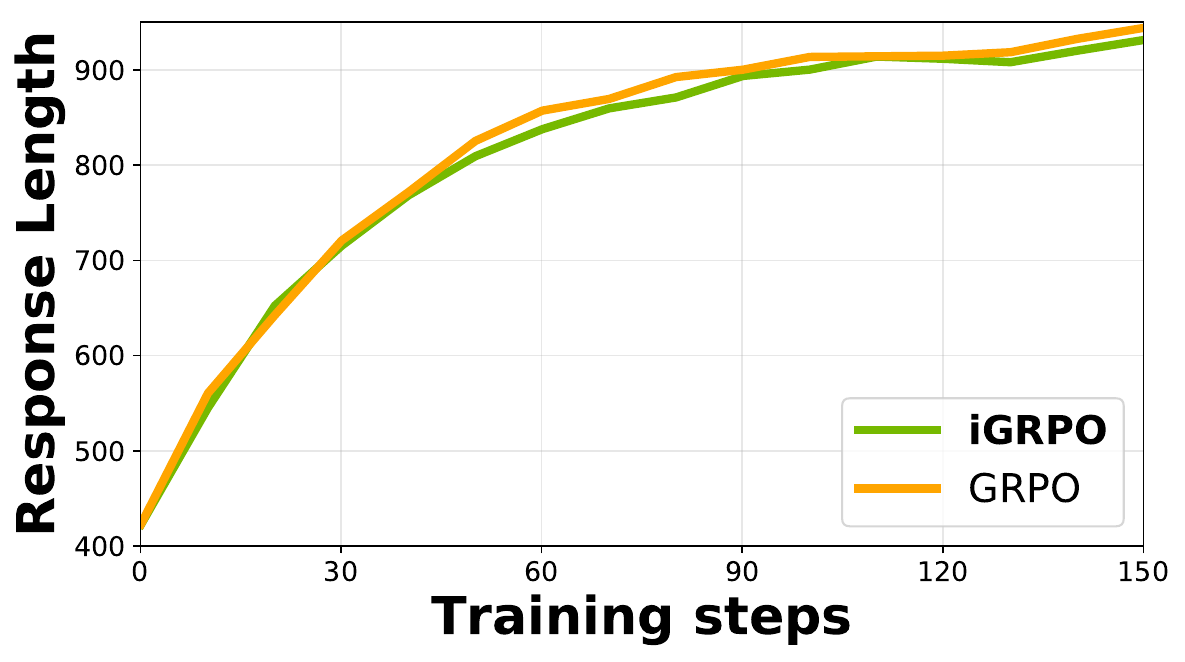}
    \caption{Response length.}
    \label{fig:length}
  \end{subfigure}
  \caption{Comparison of (a) average training rewards and (b) response lengths for GRPO vs.\ iGRPO.}
  \label{fig:reward_length}
\end{figure}

\paragraph{Effect of KL Divergence Term.}
We vary the coefficient $\beta \in \{0, 0.0001, 0.001, 0.01\}$ to examine how tightly the policy is regularized against the reference model. As shown in Table~\ref{tab:ablation}, while $\beta = 0.0001$ achieves the highest overall score (70.23\%), the difference among all settings is relatively small. The KL term, in principle, balances exploration with adherence to the current policy. However, given the marginal gains observed, setting $\beta = 0$ offers a simpler training pipeline without sacrificing significant performance. Hence, we use $\beta = 0$ to reduce overhead and maintain efficiency.

\begin{table}[t]
\centering
\caption{Ablation results for iGRPO. We study the effect of the KL divergence coefficient $\beta$ (top) and the total number of completions (bottom).}
\label{tab:ablation}
\begin{tabular}{c c}
\toprule
\multicolumn{2}{c}{\textbf{Effect of KL Divergence Coefficient $\beta$}} \\
\midrule
$\beta$ & Score (\%) \\
\midrule
0       & 69.87 \\
0.0001  & \textbf{70.23} \\
0.001   & 69.31 \\
0.01    & 69.91 \\
\midrule
\multicolumn{2}{c}{\textbf{Effect of Number of Completions}} \\
\midrule
Total Completions & Score (\%) \\
\midrule
4   & 67.79 \\
8   & 69.87 \\
16  & 70.17 \\
32  & \textbf{70.33} \\
\bottomrule
\end{tabular}
\end{table}

\paragraph{Effect of Number of Completions.}
We study how the total number of completions in iGRPO affects performance, allocating $4, 8, 16,$ or $32$ completions evenly across the two stages. As shown in Table~\ref{tab:ablation}, increasing from $4$ to $8$ completions gives a clear improvement, while gains beyond $8$ are modest. Larger budgets also increase training time and inference latency for minimal returns.

\end{document}